\newtheorem{theorem}{Theorem}
\newtheorem{proposition}{Proposition}
\newtheorem{definition}{Definition}
\newtheorem{assumption}{Assumption}
\newcommand{\Disc}{\text{Disc}_G(K)}
\newcommand{\atan}{\tan^{-1}}
\newcommand{\km}{\kappa_{\max}}
\newcommand{\am}{a_{\max}}
\newcommand{\dm}{d_{\max}}
\newcommand{\Rn}{\mathbb{R}^n}
\begin{document}

\title{Safe Motion Planning for Autonomous Driving using an Adversarial Road Model}


\author{\authorblockN{Alexander Liniger}
\authorblockA{CVL, ETH Zurich, Switzerland\\
Email: alex.liniger@vision.ee.ethz.ch}
\and
\authorblockN{Luc van Gool}
\authorblockA{CVL, ETH Zurich, Switzerland\\
PSI, Katholieke Universiteit Leuven, Belgium\\
Email: vangool@vision.ee.ethz.ch}
}

\maketitle

\begin{abstract}
This paper presents a game-theoretic path-following formulation where the opponent is an adversary road model. This formulation allows us to compute safe sets using tools from viability theory, that can be used as terminal constraints in an optimization-based motion planner. Based on the adversary road model, we first derive an analytical discriminating domain, which even allows guaranteeing safety in the case when steering rate constraints are considered. Second, we compute the discriminating kernel and show that the output of the gridding based algorithm can be accurately approximated by a fully connected neural network, which can again be used as a terminal constraint. Finally, we show that by using our proposed safe sets, an optimization-based motion planner can successfully drive on city and country roads with prediction horizons too short for other baselines to complete the task.
\end{abstract}

\IEEEpeerreviewmaketitle

\section{Introduction} \label{sec:intro}
Over the last few decades, self-driving cars have improved dramatically, from the beginnings in the late 80's \cite{dickmanns1987autonomous}, to the DARPA challenges \cite{Buehler2007,Buehler2009}, to the autonomous cars of today, that can autonomously navigate public roads. 

The drastic advances were achieved by improving the sensors but also the algorithms within the software stack of an autonomous car. The software stack of an autonomous car can be roughly split into four layers: localization, perception, planning, and control. In this paper, we focus on the planning and control layer, and more specifically on the motion planning problem, see \cite{paden2016} for an overview of motion planning algorithms. To goal of this paper is to derive safe sets for motion planning that can be used as a terminal constraint in a Model Predictive Controller (MPC). MPC is a popular approach for motion planning \cite{Borrelli2005,Gao2010,DiCairano2010,Liniger_2014} since it allows to directly include temporal and spatial collision avoidance constraints, as well as comfort constraints. Additionally, it is straight forward to plan a motion \emph{without} separating longitudinal and lateral dynamics, which allows for a natural motion planning formulation. However, MPC comes with several drawbacks. In this paper, we focus on the issue of recursive feasibility, where the issue is that even though constraints are considered within the prediction horizon, it is not guaranteed that the constraints can be respected when the controller is run in a receding horizon fashion since the sequence of optimization problems becomes coupled. Recursive feasibility can be achieved by imposing a terminal set constraint \cite{Mayne2000}. However, even though the theory exists for nonlinear MPC \cite{Kerrigan2000b}, it is complex to compute terminal sets for this class of problems. Therefore, in practice, one common approach to deal with this issue is to use long prediction horizons, which in terms leads to prohibitively long computation times. 

In this paper, we specifically focus on the problem of following a path with safety guarantees. Formal and provable safety guarantees for planning and decision making in autonomous driving are currently studied a lot, with one good example being the Responsibility-Sensitive Safety (RRS) approach \cite{rss}. However, these approaches mainly deal with the other road users, while guaranteed following of a path is surprisingly studied less. This is mainly addressed in the subfield of autonomous racing \cite{Liniger_2017,rosolia_2017}, where the fact that the race track is known in advance and does not change is leveraged. The same approach, however, cannot be applied directly to autonomous driving, as it would require computing the safe set for the entire road network. Therefore, there only exist a small number of papers that tackle safety and recursive feasibility for motion planning. For example in \cite{Althoff2009,Wood2012,Nilsson2014,Liniger_2017,Berntorp_2017,schouwenaars2004receding} viability, reachability, and invariant set analysis have been used to guarantee safety. In \cite{rosolia_2017} recursive feasibility is guaranteed using past data and the repetitive nature of autonomous racing. Another approach do introduce safety is by considering uncertainty and tracking errors when generating the trajectory \cite{majumdar2017funnel,herbert2017fastrack,singh2017robust,smith2019continuous}. These methods often use tools from game theory similar to our method, but our safety guarantees are for a path (the road) which is fixed.

In this work, we solve this problem by treating the road ahead as an adversarial player. This allows us to use tools from viability theory and game theory \cite{Aubin2009,Cardaliaguet99}. Given these tools, we design a motion planner that is guaranteed to follow a path which is not known in advance, but has a bounded curvature, all of that while guaranteeing that the car is staying within road limits and fulfilling comfort constraints.

The contributions of this paper are the following. First, we formulate the path-following problem as a game between the controller that wants to follow the road and an adversarial road, that tries to get the car off the road. Based on this game we find an analytical expression for a discriminating domain of the game, which can be used as a terminal set that proves recursive feasibility. Second, we compute the discriminating kernel using a gridding based algorithm and show that it can be accurately approximated using a fully connected neural network. This neural network can again be used as a terminal constraint for our path-following MPC. Finally, we evaluate our proposed terminal constraints together with a path-following MPC in a simulation study and show that they outperform common baselines, and work well with short prediction horizons where the baselines fail. 

This paper is organized as follows. Section \ref{sec:MPC} introduces the path-following MPC problem, and Section \ref{sec:viab} summarizes the discriminating kernel algorithm of \cite{Cardaliaguet99}. In Section \ref{sec:safeset} the path-following task is reformulated as a game and the discriminating domain and kernel for the task are described and computed. In Section \ref{sec:sim} we study the performance of the controller in simulation and compare it to several baseline terminal constraints. Finally, in Section \ref{sec:con} we conclude the paper and give an outlook.

\section{Path-Following MPC}\label{sec:MPC}

In this section, we introduce our path-following MPC problem including all the necessary components, such as the model, constraints, and cost function. Note that our safe sets, derived in Section \ref{sec:safeset}, are also based on the modeling and the constraints introduced here.

\subsection{Model}
In this paper, we use a kinematic bicycle model formulated in curvilinear coordinates. Thus the position is not formulated in global coordinates, but only the local deviations with respect to a known path are considered, see \cite{rajamani2011}. More precisely, in the curvilinear coordinate system we only consider the progress along the path $s$, the orthogonal distance to the path $d$, as well as the local heading compared to the path $\mu$, see Figure \ref{fig:curvi_model} for an illustration and \cite{rucco_2015} for more details. Note that given the path, the global coordinates can be recovered, and given the global coordinates the curvilinear coordinates can be obtained, given some assumptions.

To formulate the dynamics of our MPC, we consider the kinematic bicycle conditions (see Figure \ref{fig:curvi_model}), use the steering angle $\delta$ as an input and augment the system with a dynamic velocity state $v$, that is the integration of the acceleration input $a$. Finally, we consider the influence of the curvature $\kappa$ on the evolution of the curvilinear coordinates, which results in the following dynamics,
\begin{align}
    \dot{s} &= \frac{v \cos(\mu)}{1 - d \kappa(s)} \,, \nonumber\\
	\dot{d} &= v \sin(\mu) \,, \nonumber\\
	\dot{\mu} &= \frac{v \tan(\delta)}{L} - \kappa(s) \frac{v \cos(\mu)}{1 - d \kappa(s)}\,, \nonumber\\
	\dot{v} &= a\,. \label{eq:dyn_mpc}
\end{align}
Where, the state and input are given by $x = (s,d,\mu,v)$, and $u = (\delta,a)$ respectively, and $\kappa(s)$ is the curvature at the progress $s$. We denote the resulting continuous time dynamics \eqref{eq:dyn_mpc} as $\dot{x} = f_c(x,u)$, and the discrete time version obtained by discretizing the dynamics using an Ordinary Differential Equation (ODE) integrator as $x_{k+1} = f(x_k,u_k)$. Note that the rest of the MPC ingredients are all derived in discrete time, since we use a standard discrete time MPC formulation.

\begin{figure}[h]
\centering
\vspace{0.1cm}
\includegraphics[width = 0.35\textwidth]{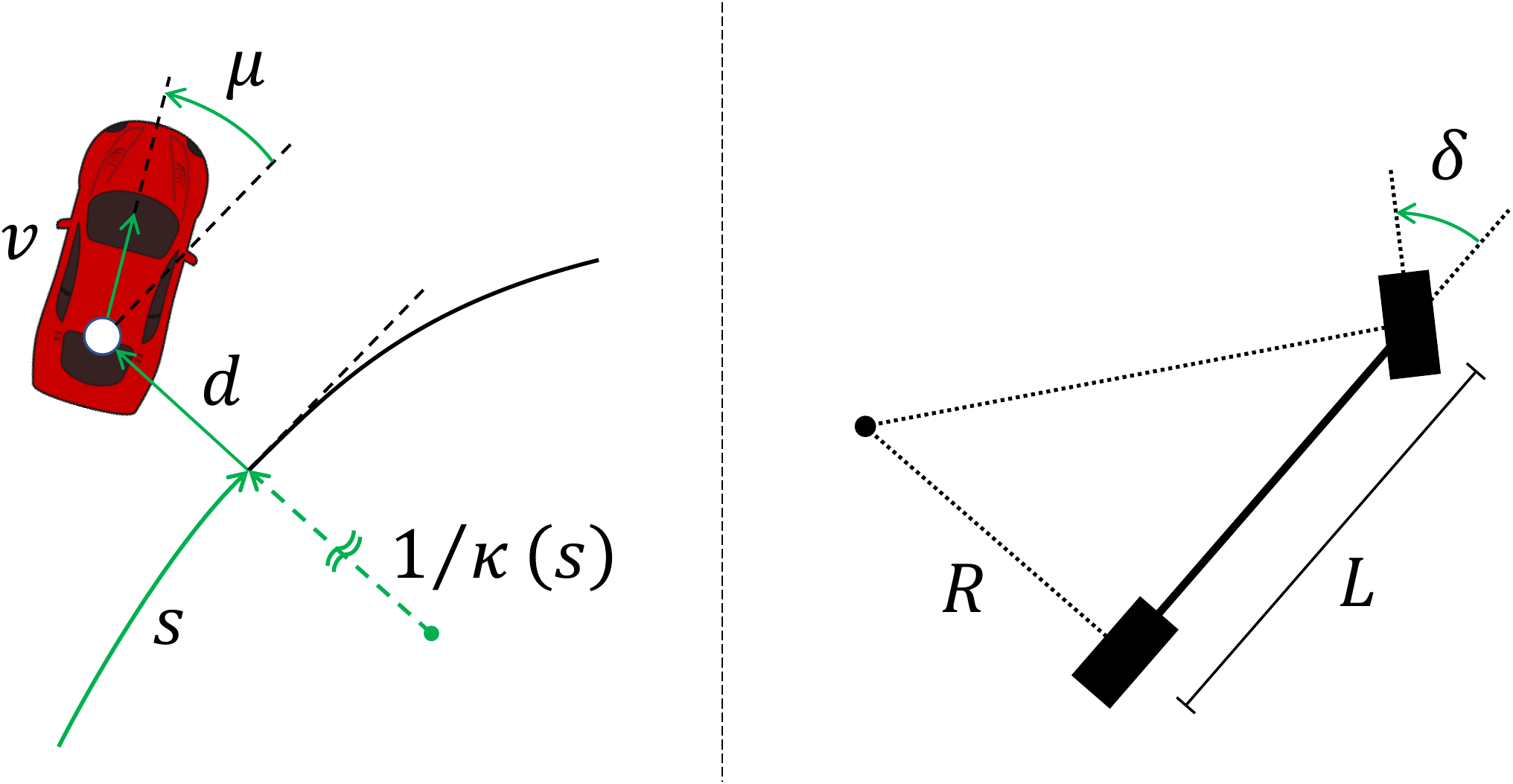}
\caption{Right: Visualization of the curvilinear coordinates. Left: Visualization of the kinematic bicycle conditions.} \label{fig:curvi_model}
\end{figure}
Note that the progress state $s$, is only linked to the other states due to the $s$-dependent curvature. However, $s$ could also be used to implement collision constraints.

\subsection{Constraints}

In our MPC formulation, we consider two types of constraints: acceleration constraints and road constraints. The acceleration constraint is fundamental for two reasons: first, it guarantees comfort for the people inside the car, and second, the kinematic model \eqref{eq:dyn_mpc} does not consider slip in the tires and would overestimate the velocity in corners in the absence of the acceleration constraint.

The acceleration constraint is formulated by limiting the combined acceleration in longitudinal and lateral direction to be smaller than $\am$. For the model in \eqref{eq:dyn_mpc}, the longitudinal acceleration is given by $a_\text{long} = a$, and the lateral acceleration is $a_\text{lat} = v^2 \tan(\delta)/L$. The lateral acceleration is given by the centripetal acceleration of the car, combined with the fact that by using a kinematic car model we know that the car is driving a circle with a radius of $R = L/\tan(\delta)$, see Figure \ref{fig:curvi_model}.

Thus, the acceleration constraint can be formulated as a velocity-dependent input constraint,
\begin{align}
\mathcal{U}(v) = \{\delta \in \mathcal{D}, a \in \mathcal{A} \, | \, (v^2 \tan(\delta) / L)^2 + a^2 \leq \am^2\}, \label{eq:acc_con}
\end{align}
where $\mathcal{D} = [-\bar{\delta},\bar{\delta}]$ and $\mathcal{A} = [\underline{a},\bar{a}]$ are the physical limits of the inputs.

The road constraints force the car to remain within the lane. By using curvilinear coordinates, the road constraints can be implemented by limiting $d$; however, to guarantee that the whole car remains within the road, the local heading of the car has to be considered. To formulate the constraint, let $L_{c}$ and $W_c$ be the length, and the width of the car respectively, and $l_r$ the distance from the center of the rear axle to the geometric center of the car, see Figure \ref{fig:road_con} for an illustration. Then, the car stays within the road if,
\begin{align}
    &C_w = W_c/2 \cos(\mu) + L_c/2 \sin(|\mu|) \,, \nonumber \\
    &d + l_r \sin(\mu) + C_w -W_{r,l} \leq 0 \,, \nonumber \\
   -&d - l_r \sin(\mu) + C_w + W_{r,r}\leq 0 \,, \label{eq:road_con}
\end{align}
where $C_w$ is the width of the car relative to the lane (considering the local heading), $W_{r,l}$ the left road width and $W_{r,r}$ the right road width, both relative to the reference path (in our experiments we use the center line of the lane). Note that $W_c$ is as such not differentiable; however, it can be exactly reformulated as a smooth constraint, by using four instead of two constraints. To simplify the notation, in Section \ref{sec:safeset} we denote the road constraint \eqref{eq:road_con} as $h_r(d,\mu) \leq 0$. 

\begin{figure}[h]
\centering
\vspace{0.1cm}
\includegraphics[width = 0.35\textwidth]{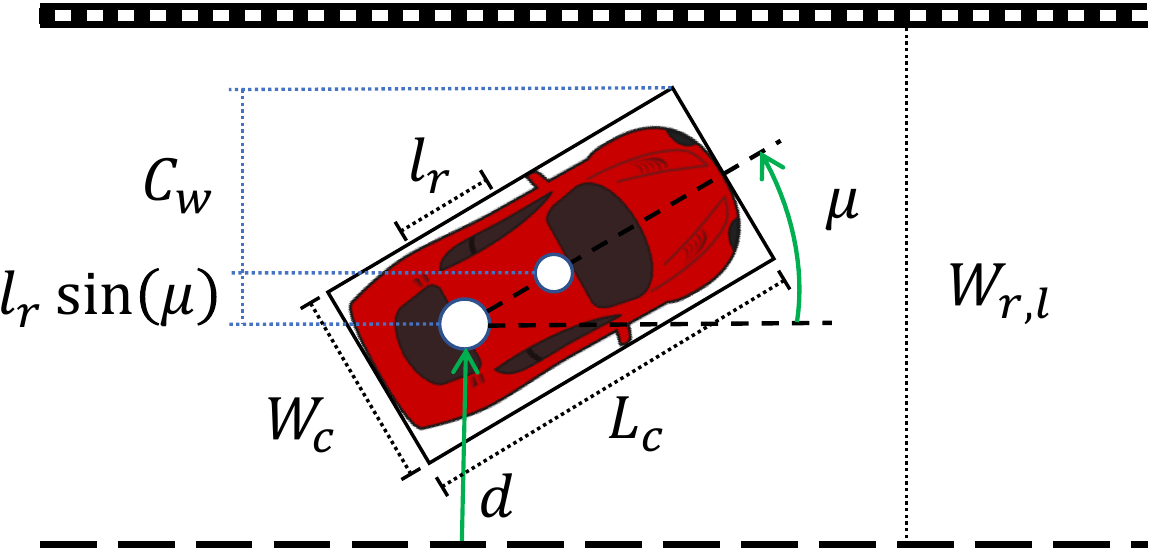}
\caption{Visualization of the road constraints.} \label{fig:road_con}
\end{figure}

Finally, we also constrain the relative heading to be within reasonable bounds $\underline{\mu} \leq \mu \leq \bar{\mu}$, and the velocity to be between zero and the speed limit $v_{\max} (s)$ at the current progress $s$. The combination of the road constraints as well as the heading and velocity constraints is denoted as $x_k \in \mathcal{X}$

\subsection{Terminal Constraint}

As discussed in the Introduction \ref{sec:intro}, for an MPC to have safety guarantees, terminal constraints are necessary. In Section \ref{sec:safeset} a solution is presented for how to tackle this problem for the here presented path-following MPC. However, for now, we assume that we have an abstract terminal constraint of the form $x_{N+1} \in \mathcal{X}_T$, where $N+1$ is the prediction horizon.

\subsection{Cost}

The cost of the MPC is the combination of path-following and comfort costs. More precisely, the path-following cost consists of two parts: $j_{pf}$ a quadratic cost on $d$ and $\mu$ that penalizes deviations from the path, and $j_p$ a maximum progress cost,
\begin{align}
    &j_{pf}(x_k) = q_d d_k^2 + q_\mu \mu_k^2\,, \nonumber\\
    &j_p(x_k) = -q_p s_{N+1}\,. \label{eq:pf_cost}
\end{align}
Where $q_d$, $q_\mu$, and $q_p$ are positive weights. Note that instead of a maximum progress cost it is also possible to penalize the deviation from the speed limit. However, we use the maximum progress cost since speed limits are often not appropriate on curvy and urban roads. 

The comfort cost consists of two terms, an acceleration cost $j_a$, that penalizes lateral and longitudinal acceleration, and an input rate cost $j_r$ that penalizes changes in the inputs or in other words jerky motions,
\begin{align}
    &j_{a}(x_k,u_k) = q_\text{lat} (v_k^2\tan(\delta_k)/L)^2 + q_\text{long} a_k^2 \,, \nonumber\\
    &j_{r}(u_k,u_{k-1}) = q_{\Delta \delta} \Delta \delta_k^2 + q_{\Delta a} \Delta a_k^2\,. \label{eq:comf_cost}
\end{align}
Where $q_\text{lat}$, $q_\text{long}$, $q_{\Delta \delta}$, and $q_{\Delta a}$ are positive weights and the input rates are defined as follows: $\Delta \delta_k := \delta_k - \delta_{k-1}$ and $\Delta a_k := a_k - a_{k-1}$.

The path-following cost \eqref{eq:pf_cost} and the comfort cost \eqref{eq:comf_cost} are combined to the following stage cost 
\begin{align*}
    \ell(x_k, u_k, u_{k-1})  = &j_{pf}(x_k) + j_p(x_k) \\
                           & + j_{a}(x_k,u_k) + j_{r}(u_k,u_{k-1})\,,
\end{align*}
where $u_{-1}$ is the known last applied input. Additionally, we also introduce a terminal cost which is a modified path-following cost \eqref{eq:pf_cost},
\begin{align*}
    \ell_T(x_{N+1}) = q_{d,T} d_{N+1}^2 + q_{\mu,T} \mu_{N+1}^2\,,
\end{align*}
where the weights $q_{d,T}$ and $q_{\mu,T}$ are increased compared to the normal weights.

\subsection{MPC Problem}

Combining all of the previous ingredients, we can formulate the path-following MPC as the following Nonlinear Optimization Problem (NLP),
\begin{equation}\label{eq:MPC_genCollAvoid}
	\begin{array}{ll}
		\displaystyle\min_{\mathbf x, \mathbf u}  & \displaystyle\sum_{k=0}^{N} \ell(x_k, u_k, u_{k-1}) +  \ell_T(x_{N+1})\\
		\ 	\text{s.t.} & x_0 = x(0), \; u_{-1} = u(-1),\\
					&x_{k+1} = f(x_k,u_k), \\
					& x_k \in \mathcal{X}, \quad x_{N+1} \in \mathcal{X}_T,\\
					& u_k \in \mathcal{U}(v_k),\\
					& k=0,\ldots,N\,.
	\end{array}
\end{equation}
Where $\mathbf x := [x_0,x_1,\ldots,x_{N+1}]$ is the collection of all states and $\mathbf u := [u_0,u_1,\ldots,u_{N}]$ is the collection of all inputs. Additionally, $x(0)$ is the measured state and $u(-1)$ is the previously applied input.

\section{Viability Theory} \label{sec:viab}
In this section, we will briefly discuss discrete-time viability theory \cite{Cardaliaguet99}, which we will later use to establish safe terminal constraints for our path-following MPC. More precisely we are interested in the game-theoretic version of viability theory. There the goal is to establish a \emph{victory domain} for which there exists an input policy that is able to keep the state of the system within a constraint set forever, for all adversarial actions \cite{Cardaliaguet99}.

Therefore, let us consider a discrete-time system with two inputs $z_{k+1} = g(z_k,u_k,\nu_k)$, where $z\in \Rn$ is the state, $u \in \mathcal U \subset \mathbb{R}^m$ is the control input, $\nu \in \mathcal V \subset \mathbb{R}^d$ is the adversarial input and $g:\Rn \times \mathcal U \times \mathcal V \rightarrow \Rn$ is a continuous function. Furthermore, the constraint set is given as $K \subset \Rn$ and is assumed to be a closed subset of $\Rn$. Furthermore, the policies have the following information pattern when computing the \emph{victory domain}: the control player has access to both the state $z_k$ as well as the adversarial input $\nu_k$ thus $u_k = \pi_u(z_k,\nu_k)$, whereas the adversarial player only has access to the state $\nu_k = \pi_\nu(z_k)$. Finally, the goal of the control player is to stay within the constraint set $K$ forever, whereas the adversarial player wants to reach the open set $\Rn \setminus K$.

Given these preliminaries let us now state the standard results from viability theory for this setting.

\subsection{Discriminating Domain and Kernel}

The difference equation $z_{k+1} = g(z_k,u_k,\nu_k)$ can be reformulated as a difference inclusion with one input,
\begin{align}
    z_{k+1} \in G(z_k,\nu_k) \quad \text{with } G(z,\nu) = \{g(z,u,\nu)|u \in U\}\,.
\end{align}
Thus, the difference inclusion predicts all the states for all allowed actions, given an adversarial input. Given the set-valued map $G(z,\nu)$ we can define a discriminating domain and the discriminating kernel.

\begin{definition}[\cite{Cardaliaguet99}]
A set $Q \subset \Rn$ is a discrete discriminating domain of $G$, if for all $z \in Q$, it holds that $G(z,\nu) \cap Q \neq \emptyset$ for all $\nu \in \mathcal V$. The \emph{discrete  discriminating  kernel} of  a  set $K \subset \Rn$ under $G$, denoted by $\Disc$, is the largest closed discrete discriminating domain contained in $K$.
\end{definition}

One can show that under the given assumptions, namely that $K$ is a closed subset of $\Rn$ and $G:\Rn \times \mathcal V \rightsquigarrow \Rn$ is an upper-semicontinuous set-valued map with compact values, the discriminating Kernel $\Disc$ exists. 

\subsection{Discriminating Kernel Algorithm}
The discriminating kernel $\Disc$ can conceptually be computed using the discriminating kernel algorithm, which generates a sequence of nested closed sets,
\begin{align}
    &K^0 = K\,,\nonumber \\
    &K^{n+1} = \{z \in K^n | \forall \nu \in \mathcal V, G(z,\nu) \cap K^n \neq \emptyset\}\,. \label{eq:disc_algo}
\end{align}
\begin{proposition}[\cite{Cardaliaguet99}]
Let $G$ be an upper-semicontinuous set-valued map with compact values and let $K$ be a closed subset of $\Rn$, then, $\bigcap_{n=0}^\infty = \Disc$
\end{proposition}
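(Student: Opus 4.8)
The plan is to prove the two inclusions that characterise the largest closed discriminating domain contained in $K$: first, that $D := \bigcap_{n=0}^{\infty} K^n$ is itself a closed discriminating domain with $D \subseteq K$, and second, that $D$ contains every other closed discriminating domain contained in $K$. Together these give $D = \Disc$.

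I would start with the ``largest'' direction, which is the easy part and needs no topology. Let $Q \subseteq K$ be any discriminating domain. I claim $Q \subseteq K^n$ for all $n$, by induction. For $n = 0$ this is the hypothesis $Q \subseteq K = K^0$. Assuming $Q \subseteq K^n$, pick $z \in Q$; since $Q$ is a discriminating domain, $G(z,\nu) \cap Q \neq \emptyset$ for every $\nu \in \mathcal V$, and since $Q \subseteq K^n$ this yields $G(z,\nu) \cap K^n \neq \emptyset$; combined with $z \in Q \subseteq K^n$ this is precisely the defining condition for $z \in K^{n+1}$ in \eqref{eq:disc_algo}. Hence $Q \subseteq K^{n+1}$, the induction closes, and therefore $Q \subseteq \bigcap_n K^n = D$.

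The substantive part is showing that $D$ is a closed discriminating domain in $K$. Containment $D \subseteq K^0 = K$ and the nesting $K^{n+1} \subseteq K^n$ are immediate from \eqref{eq:disc_algo}. I would first show by induction that each $K^n$ is closed: $K^0 = K$ is closed, and if $K^n$ is closed, take $z_j \to z$ with $z_j \in K^{n+1}$; then $z \in K^n$ by closedness, and for each fixed $\nu$ one may choose $y_j \in G(z_j,\nu) \cap K^n$. Since an upper-semicontinuous set-valued map with compact values is locally bounded, the $y_j$ eventually lie in a fixed compact set, so a subsequence converges, $y_j \to y$; the closed-graph property of $G$ (a consequence of upper semicontinuity with compact, hence closed, values) gives $y \in G(z,\nu)$, while $y \in K^n$ because $K^n$ is closed. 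Thus $z \in K^{n+1}$, so $K^{n+1}$ is closed, and $D$ is closed as an intersection of closed sets. To see that $D$ is a discriminating domain, fix $z \in D$ and $\nu \in \mathcal V$; since $z \in K^{n+1}$ for every $n$, pick $y_n \in G(z,\nu) \cap K^n$. As $G(z,\nu)$ is compact, a subsequence converges, $y_{n_k} \to y \in G(z,\nu)$, and for any fixed $m$ we have $y_{n_k} \in K^{n_k} \subseteq K^m$ once $n_k \geq m$, so $y \in K^m$ by closedness of $K^m$; hence $y \in \bigcap_m K^m = D$, giving $G(z,\nu) \cap D \neq \emptyset$.

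Combining the two directions, $D$ is a closed discriminating domain contained in $K$ that contains every such domain, so it is the largest one, i.e.\ $D = \Disc$. The main obstacle is the compactness bookkeeping in the third paragraph --- extracting convergent subsequences of selections $y_j \in G(z_j,\nu)$ (resp.\ $y_n \in G(z,\nu)$) and passing to the limit --- which is exactly where the hypotheses that $G$ is upper-semicontinuous with compact values enter, through local boundedness and the closed-graph property; the ``largest'' direction is a routine induction.
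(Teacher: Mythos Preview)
Your argument is correct and is the standard proof of this classical fact. Note, however, that the paper does not supply its own proof of this proposition: it is stated as a result quoted from \cite{Cardaliaguet99}, with no proof environment following it. So there is nothing in the paper to compare against --- your write-up simply fills in what the reference is being invoked for.

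The two directions you give are exactly the right decomposition. The induction showing that every discriminating domain $Q \subseteq K$ lies in each $K^n$ is clean and needs no hypotheses beyond the recursive definition \eqref{eq:disc_algo}. The harder direction --- closedness of each $K^n$ and the discriminating-domain property of $D$ --- uses the hypotheses precisely where they are needed: upper semicontinuity with compact values gives local boundedness (so the selections $y_j$ have convergent subsequences) and the closed-graph property (so the limit $y$ stays in $G(z,\nu)$), while compactness of $G(z,\nu)$ gives the convergent subsequence of the $y_n$ in the final step. One minor remark: in the closedness-of-$K^{n+1}$ step you fix $\nu$ before extracting the subsequence of $y_j$, which is fine since membership in $K^{n+1}$ is checked $\nu$-by-$\nu$; just be aware that the subsequence may depend on $\nu$, which causes no problem here.
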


However, since the discriminating kernel algorithm has to operate on arbitrary sets, it is as such not implementable. One popular method to solve this problem is to discretize the state space as well as the input space of the adversarial player. If the state space is appropriately discretized, \cite{Cardaliaguet99} showed that the discrete space discriminating kernel converges to the discriminating kernel if the space discretization goes to zero. 

\section{Safe Set for Path-Following} \label{sec:safeset}
The main issue when trying to establish a safe set for the path-following task is that the MPC only knows the road until the end of the prediction horizon. Thus, one popular approach to solve the issue is to force the car to stop at the end of the horizon \cite{Kuwata_2009}. However, this requires a long prediction horizon, such that the terminal constraint does not negatively influence the current decisions. In this paper we propose to treat the unknown road ahead as an adversarial player, which allows us to compute the discriminating kernel, which in turn guarantees infinite horizon constraint satisfaction.

\subsection{Road Curvature as an Adversary}
In our curvilinear dynamics \eqref{eq:dyn_mpc}, the road ahead is captured by the curvature. Thus, modeling the road ahead as an adversarial player is equivalent to modeling the curvature as an input and limiting the input to stay within an upper and lower bound. This makes the curvature independent of the progress, and therefore we can neglect the progress dynamics for the computation of the discriminating kernel, as it is not linked to any of the other states or the constraints. Thus, we can introduce a reduced state $z = (d,\mu,v)$. The input remains the same $u=(\delta,a)$, but the system now has an adversarial input $\nu = \kappa$, and the system dynamics are given by,
\begin{align}
	\dot{d} &= v \sin(\mu) \,, \nonumber\\
	\dot{\mu} &= \frac{v \tan(\delta)}{L} - \kappa \frac{v \cos(\mu)}{1 - d \kappa}\,, \nonumber\\
	\dot{v} &= a\,. \label{eq:dyn_disc}
\end{align}

Similar to \eqref{eq:dyn_mpc}, we denote the continuous dynamics as $\dot{z} = g_c(z,u,\nu)$, and the discretized dynamics as $z_{k+1} = g(z_k,u_k,\nu_k)$. Similarly the input constraint set is given by \eqref{eq:acc_con}, and the state constraint set is given by,
\begin{align*}
K = \{ z \in \mathbb{R}^3 \; | \; -&\underline{\mu} \leq \mu \leq \bar{\mu}, \,, 0\leq v\leq \bar{v},\\
    &h_r(d,\mu) \leq 0\,,
\end{align*}
where $\bar{v}$ is the maximum speed limit. Finally, the action of the adversarial player is limited to $\mathcal V = [-\km,\km]$, where $\km$ is the maximal curvature of the road.

Thus, we can now characterize the discriminating kernel for our path-following system.

\subsection{Analytical Discriminating Domain}
First, we formulate an analytical discriminating domain, which by definition is a subset of the discriminating kernel. Even though it is only a subset of the discriminating kernel, the advantage is that the set is defined analytically and therefore no expensive computations are necessary. We additionally have to make the following mild assumptions.
\begin{assumption}
The set $K$ has a relative interior and there exists a $d$ such that $\mu = 0$ is feasible. \label{ass:con_set}
\end{assumption}
\begin{assumption}
The road constraints $\eqref{eq:road_con}$ and $\km$ are such the $d \kappa < 1$ is always fulfilled. \label{ass:curvi}
\end{assumption}
\begin{assumption}
For the ODE integrator that discretizes the continuous time dynamics $\eqref{eq:dyn_disc}$, it holds that $z_{k+1} = z_k$ if $g_c(z,u,\nu) = 0$ \label{ass:integrator}
\end{assumption}

The first assumption \ref{ass:con_set}, informally guarantees that the road is wide enough. Assumption \ref{ass:curvi} guarantees that the curvilinear dynamics are well defined. Finally, Assumption \ref{ass:integrator} guarantees that the ODE integrator is well behaved in steady-state, which is fulfilled for the integrators used in this paper. 

Finally, for the following results, we need $\dm$ which is the ``largest" allowed deviation from the path. For the special case when $\mu = 0$, this can simply be found by considering the road constraints \eqref{eq:road_con}, thus $\dm = \max(|W_{r,l} + W_c/2|,|W_{r,r} - W_c/2|)$. Note that $\dm$ is also the deviation $d$ for which the disturbance $\kappa$ has the largest influence on the system $\eqref{eq:dyn_disc}$.

Thus we can now state our main result which is the analytical discriminating domain.
\begin{theorem}
Given that Assumptions \ref{ass:con_set}, \ref{ass:curvi}, and \ref{ass:integrator} hold, and that $\km  \geq \tan (\bar{\delta})  / (L  + \dm \tan (\bar \delta))$, then 
\begin{align}
    \mathcal{Z}_{DD} = \begin{cases}
        &W_{r,l} + W_c/2 \leq d \leq W_{r,r} - W_c/2\\
        &\mu = 0\\
        &0 \leq v \leq \min \left(\bar{v}, \sqrt{\frac{\am (1 - |d \km|)}{\km}} \right)
    \end{cases}\,,
\end{align}
is a discriminating domain. \label{theo:disc_dom}
\end{theorem}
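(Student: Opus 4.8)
\emph{Proof plan.}
The plan is to verify the defining property of a discriminating domain directly. Fix an arbitrary state $z=(d,0,v)\in\mathcal{Z}_{DD}$ and an arbitrary adversarial curvature $\nu=\kappa\in[-\km,\km]$; it suffices to exhibit one admissible control $u\in\mathcal{U}(v)$ with $g(z,u,\kappa)\in\mathcal{Z}_{DD}$, since then $G(z,\kappa)\cap\mathcal{Z}_{DD}\neq\emptyset$ and $z,\kappa$ were arbitrary (Assumption \ref{ass:con_set} ensures $\mathcal{Z}_{DD}\neq\emptyset$, and at $\mu=0$ the road, heading and speed constraints reduce to the bounds defining $\mathcal{Z}_{DD}$, so $\mathcal{Z}_{DD}\subseteq K$). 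The key observation is that the adversary $\kappa$ enters only the $\mu$-equation of \eqref{eq:dyn_disc}, and its action there can be \emph{exactly cancelled} by the steering input; this collapses the game onto a steady-state argument via Assumption \ref{ass:integrator}.

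Concretely, I would take $a=0$ and $\delta=\atan\!\left(\kappa L/(1-d\kappa)\right)$, which is well defined because $1-d\kappa>0$ by Assumption \ref{ass:curvi}. Substituting $\mu=0$ and this $u$ into \eqref{eq:dyn_disc} gives $\dot d=v\sin 0=0$, $\dot v=a=0$, and $\dot\mu=\tfrac{v\tan\delta}{L}-\tfrac{v\kappa}{1-d\kappa}=0$ by the choice of $\delta$ (for $v=0$ the last two identities are trivial, so the same $u$ works). Hence $g_c(z,u,\kappa)=0$, and Assumption \ref{ass:integrator} yields $g(z,u,\kappa)=z\in\mathcal{Z}_{DD}$. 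The only substantive step left is admissibility, i.e.\ $u\in\mathcal{U}(v)$ as in \eqref{eq:acc_con}: $\delta\in\mathcal{D}$, $a\in\mathcal{A}$, and $(v^2\tan\delta/L)^2+a^2\le\am^2$.

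For the steering bound, I would note $|\tan\delta|=|\kappa|L/(1-d\kappa)$ and that $(d,\kappa)\mapsto|\kappa|/(1-d\kappa)$ is, over $|d|\le\dm$ (which holds on $\mathcal{Z}_{DD}$ by the definition of $\dm$) and $|\kappa|\le\km$, bounded above by $\km/(1-\dm\km)$, with $1-\dm\km>0$ a consequence of Assumption \ref{ass:curvi}. The hypothesis on $\km$ is designed precisely so that $\km L\le(1-\dm\km)\tan\bar\delta$, whence $|\tan\delta|\le\tan\bar\delta$, i.e.\ $\delta\in\mathcal{D}$. For the acceleration constraint, with $a=0$ it reduces to $v^2|\tan\delta|/L\le\am$, i.e.\ $v^2|\kappa|/(1-d\kappa)\le\am$; bounding $v^2|\kappa|/(1-d\kappa)\le v^2\km/(1-|d|\km)$ as above and invoking the velocity cap $v\le\sqrt{\am(1-|d\km|)/\km}$ built into $\mathcal{Z}_{DD}$ closes this, while $0\in\mathcal{A}$ is immediate. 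Together with the previous paragraph this finishes the proof.

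The conceptual step — that the disturbance is matched and can be annihilated by feedback, reducing everything to Assumption \ref{ass:integrator} — is short once seen; the main obstacle is the \emph{uniform} admissibility check, namely showing the cancelling steering and the induced lateral acceleration stay feasible simultaneously for every $d$ in the road band and every $|\kappa|\le\km$. This is exactly where $\dm$ (the worst-case deviation), the hypothesis on $\km$ (it keeps the cancelling steer within $\bar\delta$) and the explicit velocity cap in $\mathcal{Z}_{DD}$ (it keeps the lateral acceleration within $\am$) each play their role; the only real care needed there is the sign and monotonicity bookkeeping that collapses the estimates to the worst case $|d|=\dm$, $|\kappa|=\km$.
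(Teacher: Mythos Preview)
Your proposal is correct and follows essentially the same route as the paper: choose the curvature-cancelling feedback $a=0$, $\delta=\atan(\kappa L/(1-d\kappa))$, observe that this makes $g_c(z,u,\kappa)=0$ on $\mathcal{Z}_{DD}$, invoke Assumption~\ref{ass:integrator} to obtain $z_{k+1}=z_k$, and then verify admissibility of $u$ via the velocity cap (for the lateral acceleration bound) and the hypothesis on $\km$ (for $\delta\in\mathcal{D}$). Your write-up is in fact a bit more careful than the paper's in spelling out $\mathcal{Z}_{DD}\subseteq K$, the well-definedness of $\delta$ via Assumption~\ref{ass:curvi}, and the monotonicity bookkeeping that reduces the admissibility check to the worst case $|d|=\dm$, $|\kappa|=\km$; note also that the inequality you actually use, $\km L\le(1-\dm\km)\tan\bar\delta$, corresponds to $\km\le\tan(\bar\delta)/(L+\dm\tan(\bar\delta))$, which is the direction the paper's own proof uses as well (the $\geq$ in the theorem statement appears to be a typo).
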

\begin{proof}
Since, $\mathcal{Z}_{DD} \subset K$, it is sufficient that there exists an input in $u \in U(v)$ such that $z_{k+1} = z_k$ for all $\nu \in \mathcal V$, for $\mathcal{Z}_{DD}$ to be a \emph{discriminating domain}, since in this case $G(z,\nu) \cap \mathcal{Z}_{DD} \neq \emptyset$ holds. Furthermore, $z_{k+1} = z_k$ holds, if the continuous time dynamics \eqref{eq:dyn_disc} are stationary, $g_c(z,u,\nu) = 0$. Which can be achieved by designing the following policy,
\begin{align}
    \pi_u(z,\nu) = \begin{cases}
        \delta = \atan\left(\frac{\kappa L}{1 - d \kappa} \right)\\
        a = 0
    \end{cases}\,. \label{eq:policy}
\end{align}
The policy \eqref{eq:policy} fulfills the input constraints if $0 \leq v \leq \sqrt{\am(1 - d \km)/\km}$ and if $\km  \geq \tan (\bar{\delta}) / (L  + \dm \tan (\bar \delta)$. The curvature limitation is needed for the physical limit of the steering input, whereas the velocity limitation comes form the lateral acceleration limits.

First, using $a=0$ guarantees that the input constraints are fulfilled and allows to simplify the input constraints for $\delta$ to $-\atan (\am L/v^2)  \leq \delta  \leq \atan(\am L/v^2)$. Plugging in the policy, and only considering the upper bound due to the symmetry of $\mathcal U(v)$ and $\mathcal V$, results in $\atan (\kappa L/(1 - d \kappa)) \leq \atan(\am L/v^2)$, which holds true for all $\kappa \in [-\km,\km]$ if $0 \leq v \leq \sqrt{\am(1 - |d \km|)/\km}$

If $\km  \geq \tan (\bar{\delta}) / (L  + \dm \tan (\bar \delta))$ then it holds that $\delta  = \atan (\kappa L/(1 - d \kappa) ) \leq \bar{\delta}$ for all $z \in \mathcal{Z}_{DD}$. This is simple to verify by reformulating, $\atan(\km L / (1 - \dm \km)) \leq \bar{\delta}$, as this guarantees that the biggest steering angle of the policy is within the bounds.
\end{proof}

Due to the analytical nature of the discriminating domain established in Theorem \ref{theo:disc_dom}, it is possible to generalize Theorem \ref{theo:disc_dom} to the interesting case of steering rate constraints. This can be done if the road curvature changes slowly enough compared to the steering rate constraint. 

\begin{proposition}
If the rate of change of the steering input is limited by $|(\delta_k - \delta_{k+1})| \leq \overline{\Delta \delta}$ and the rate of change of the curvature is limited by $|(\kappa_k - \kappa_{k+1})| \leq \tan(\overline{\Delta \delta})(1-\dm \km)/L$ then $\mathcal{Z}_{DD}$ is a discriminating domain.
\end{proposition}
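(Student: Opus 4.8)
The plan is to reduce the proposition to Theorem~\ref{theo:disc_dom} by showing that, under the additional rate constraints, the stationary policy \eqref{eq:policy} remains \emph{admissible from one step to the next} whenever the adversarial curvature moves by at most the allowed amount. Concretely, for a point $z \in \mathcal{Z}_{DD}$ with $d$ fixed, the policy prescribes $\delta(\kappa) = \atan\!\left(\kappa L/(1 - d\kappa)\right)$. Since $z_{k+1} = z_k$ under this input (by Assumption~\ref{ass:integrator} and $g_c = 0$), the only new obstruction is that, when the adversary changes $\kappa_k$ to $\kappa_{k+1}$, the required steering $\delta_{k+1} = \delta(\kappa_{k+1})$ must be reachable from $\delta_k = \delta(\kappa_k)$ within the budget $\overline{\Delta\delta}$. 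So the crux is a Lipschitz-type bound: $|\delta(\kappa_{k+1}) - \delta(\kappa_k)| \le \overline{\Delta\delta}$ whenever $|\kappa_{k+1} - \kappa_k| \le \tan(\overline{\Delta\delta})(1 - \dm\km)/L$.

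The key steps, in order: (i) differentiate $\delta(\kappa) = \atan\!\left(\kappa L/(1-d\kappa)\right)$ with respect to $\kappa$; using $\tfrac{d}{d\kappa}\left[\kappa L/(1-d\kappa)\right] = L/(1-d\kappa)^2$ and $\tfrac{d}{dx}\atan(x) = 1/(1+x^2)$, obtain $\delta'(\kappa) = \tfrac{L}{(1-d\kappa)^2 + \kappa^2 L^2}$. (ii) Bound $|\delta'(\kappa)|$ over $\kappa \in [-\km,\km]$ and over the range of $d$ in $\mathcal{Z}_{DD}$: the denominator is minimized (hence $|\delta'|$ maximized) at the worst-case geometry, which by the definition of $\dm$ is $d\kappa = -\dm\km$ giving denominator $\ge (1-\dm\km)^2$ and thus $|\delta'(\kappa)| \le L/(1-\dm\km)^2$. (Dropping the $+\kappa^2L^2$ term is a safe over-estimate.) (iii) Integrate: $|\delta(\kappa_{k+1}) - \delta(\kappa_k)| \le \tfrac{L}{(1-\dm\km)^2}\,|\kappa_{k+1}-\kappa_k|$. (iv) Substitute the hypothesized curvature-rate bound $|\kappa_{k+1}-\kappa_k| \le \tan(\overline{\Delta\delta})(1-\dm\km)/L$ to get $|\delta(\kappa_{k+1}) - \delta(\kappa_k)| \le \tan(\overline{\Delta\delta})/(1-\dm\km)$; then argue $\tan(\overline{\Delta\delta})/(1-\dm\km) \le \overline{\Delta\delta}$ is \emph{not} quite what is wanted — instead one should compare against $\overline{\Delta\delta}$ via the inequality $x \le \tan(x)$ reversed, i.e. we actually need the stronger reading that the bound on $|\kappa_k-\kappa_{k+1}|$ was chosen precisely so the resulting steering increment lies within $\overline{\Delta\delta}$. (v) Finally invoke Theorem~\ref{theo:disc_dom}: since the policy value at every step lies in $\mathcal{U}(v)$ (velocity bound unchanged, as $a=0$ and $v$ is stationary) and the step-to-step increment respects $\overline{\Delta\delta}$, the trajectory stays at $z_k$ forever, so $G(z,\nu)\cap\mathcal{Z}_{DD}\neq\emptyset$ for all admissible $\nu$, i.e. $\mathcal{Z}_{DD}$ is a discriminating domain of the rate-augmented game.

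I expect the main obstacle to be step~(iv)/(v): getting the constants to line up so that the derivative bound, integrated against the stated curvature-rate limit, yields \emph{exactly} $\overline{\Delta\delta}$ (or less) rather than something off by a $\tan$ versus linear factor, and making precise what the state space of the rate-augmented game is — one must append $\delta$ (and possibly $\kappa$) to the state $z$, enlarge $\mathcal{Z}_{DD}$ to the graph $\{(z,\delta) : z \in \mathcal{Z}_{DD},\ \delta = \delta(\kappa)\text{ for some }\kappa\in\mathcal V\}$, and check that this enlarged set is itself a discriminating domain under the enlarged dynamics. A secondary subtlety is handling the boundary of $\mathcal V$: when $\kappa_k = \pm\km$ the adversary can only move $\kappa$ inward, which only helps, so the worst case is genuinely an interior step of size $\tan(\overline{\Delta\delta})(1-\dm\km)/L$. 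The velocity and heading constraints require no new work since the policy still freezes $v$ and $\mu$.
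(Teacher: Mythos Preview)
Your overall plan --- invoke Theorem~\ref{theo:disc_dom} and then verify that the stationary steering policy $\delta(\kappa)=\atan\bigl(\kappa L/(1-d\kappa)\bigr)$ obeys the new rate constraint step to step --- is exactly the paper's strategy. The difference, and the source of the mismatch you correctly flag in step~(iv), is the estimate you use for the steering increment.

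Your mean-value bound $|\delta'(\kappa)|\le L/(1-\dm\km)^2$ (incidentally, the worst case is $d\kappa=+\dm\km$, not $-\dm\km$) is correct but one factor of $(1-\dm\km)$ too loose: substituting the assumed curvature-rate bound yields $\tan(\overline{\Delta\delta})/(1-\dm\km)$, which in general exceeds $\overline{\Delta\delta}$. So the argument as written does not close, and you are right to be worried.

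The paper avoids this by using the subadditivity of $\atan$ rather than its derivative. Writing $\kappa_{k+1}=\kappa_k+\Delta\kappa$ and keeping $1-d\kappa$ as a common denominator, one estimates
\begin{align*}
|\delta_{k+1}| &= \Bigl|\atan\!\Bigl(\tfrac{\kappa_k L}{1-d\kappa}+\tfrac{\Delta\kappa\,L}{1-d\kappa}\Bigr)\Bigr| \\
&\le \Bigl|\atan\!\Bigl(\tfrac{\kappa_k L}{1-d\kappa}\Bigr)\Bigr| + \Bigl|\atan\!\Bigl(\tfrac{\Delta\kappa\,L}{1-d\kappa}\Bigr)\Bigr| \\
&\le |\delta_k| + \Bigl|\atan\!\Bigl(\tfrac{\Delta\kappa\,L}{1-\dm\km}\Bigr)\Bigr|,
\end{align*}
using $|\atan(x+y)|\le|\atan(x)|+|\atan(y)|$ (oddness plus concavity of $\atan$ on $[0,\infty)$). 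The last term is at most $\overline{\Delta\delta}$ exactly when $|\Delta\kappa|\le\tan(\overline{\Delta\delta})(1-\dm\km)/L$, which is the hypothesis. This is why the constant in the proposition is $\tan(\overline{\Delta\delta})(1-\dm\km)/L$ rather than the $\overline{\Delta\delta}(1-\dm\km)^2/L$ your Lipschitz route would require.

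Your remarks about augmenting the state with $\delta$ and about the boundary of $\mathcal V$ are in fact more careful than what the paper spells out; the paper leaves those points implicit and only bounds $|\delta_{k+1}|-|\delta_k|$ rather than $|\delta_{k+1}-\delta_k|$. Your framework is sound --- just replace the derivative estimate by the $\atan$-subadditivity estimate above and the constants line up.
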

\begin{proof}
Form the proof of Theorem \ref{theo:disc_dom}, we know that the policy \eqref{eq:policy}, renders $\mathcal{Z}_{DD}$ invariant, while not violating $U(v)$. 

To investigate the case where the rate of change is limited, we can describe the curvature at time $k+1$ as $\kappa_{k+1} = \kappa_k + \Delta \kappa$, with $|\Delta \kappa| \leq \tan(\overline{\Delta \delta})(1-\dm \km)/L$. Thus the steering angle is given by $\delta_{k+1} = \atan((\kappa_k + \Delta \kappa)L/(1 - d \kappa))$. Since we are only interested in the absolute value of the difference we can take the absolute value of the expression, which allows us to make a series of simplifications,
\begin{align*}
|\delta_{k+1}|  &= |\atan((\kappa_k + \Delta \kappa)L/(1 - d \kappa))|\\
                &\leq |\atan(\kappa_k L/(1 - d \kappa))| + |\atan( \Delta \kappa L/(1 - d \kappa))|\\
                &\leq |\delta_k| + |\atan( \Delta \kappa L/(1 - d \kappa))|\\
                &\leq |\delta_k| + |\atan( \Delta \kappa L/(1 - \dm \km))|
\end{align*}
Which can be reformulated as, $|\delta_{k+1}| - |\delta_k| \leq |\atan( \Delta \kappa L/(1 - \dm \km))|$.
Thus, the change in the steering angle is bounded by $|\atan( \Delta \kappa L/(1 - \dm \km))|$, which is at most $\overline{\Delta \delta}$ if $|\Delta \kappa| \leq \tan(\overline{\Delta \delta})(1-\dm \km)/L$.
\end{proof}

\subsection{Numerical Discriminating Kernel}
The discriminating domain $\mathcal{Z}_{DD}$ is only a subset of the discriminating kernel $\Disc$. However, to numerically compute the discriminating kernel using the discriminating kernel algorithm, it is necessary to fix the model and constraint set $K$. Therefore, in Table \ref{tab:param} the model and road parameters that are used for the rest of our paper are introduced. Note that the parameters are partially taken from \cite{rajamani2011} and from a spec sheet of a 2005 BMW 320i.

\begin{table}[h]
\caption{Model and Road Parameters}
\label{tab:param}
\centering 
\begin{tabular}{@{}l l @{}}\toprule
$L = 2.68$m & $l_r = 1.34$m\\
$L_c = 4.52$m & $W_c = 1.817$m\\
$\am = 1.6$ms$^{-2}$ & $\mathcal{D} = [-0.6,0.6]$rad\\
$\mathcal{A} = [-1.6,1.6]$ms$^{-2}$ & $\bar{\mu} = -\underline{\mu} = 0.2$rad\\
$W_{r,l} = -1.5$m & $W_{r,r} = 1.5$m\\
\bottomrule
\end{tabular}
\end{table} 

Additionally, it is necessary to grid the state and adversarial input space. In our implementation we also grid the input space, since this makes the implementation considerably simpler. The parameters used for the gridding can be found in Table \ref{tab:grid}, where $\bar{\delta}(v) = \min(\atan(\am L/v^2),\bar{\delta})$. Note that we use the results of Theorem \ref{theo:disc_dom} to upper bound the velocity state. We also experimented with denser grids for the inputs, and found that it did not change the resulting discriminating kernel noticeably, if at all. We assume that this is the case since the extreme inputs are the ones which are most important. 

\begin{table}[h]
\caption{Grid Parameters}
\label{tab:grid}
\centering 
\begin{tabular}{@{}l l l l l l @{}}\toprule
state & range & $\#$grid &input & range & $\#$grid\\
\midrule
$d$ & $[-0.3415,0.3415]$ & 101 &$\nu$ & $[-\km,\km]$ & 5\\
$\mu$ & $[-0.2,0.2]$ & 81 &$\delta$ & $[-\bar{\delta}(v),\bar{\delta}(v)]$ & 9\\
$v$ & $[0,\sqrt{\am/\km}]$ & 135& $a$ & $[-\am,\am]$ & 9\\
\bottomrule
\end{tabular}
\end{table} 

Given all the parameters, the discriminating kernel algorithm $\eqref{eq:disc_algo}$ is implemented in C++ and the the discriminating kernel is computed for different $\km$. We use a 4th order Runge-Kutta integrator to discretize the dynamics with a sampling time of $T_s  = 0.2$s. The sampling time is chosen to match our state discretization, which is important for discriminating kernel algorithms \cite{Cardaliaguet99}. Given the state discretization from Table \ref{tab:grid} combined with our C++ implementation the computation times were between 15.3 and 135.9s per maximal curvature. We ran 13 different maximum curvatures ${\km}_\text{, all} = \{$0.1, 0.05, 0.04, 0.03, 0.02, 0.01, 0.005, 0.004, 0.003, 0.002, 0.0015, 0.00125, 0.001$\}$m$^{-1}$, which corresponds to roads with maximum radius between 10m and 1000m, thus capturing everything from highways to roundabouts.

\subsubsection{Neural Network Approximation}
As such the discriminating kernel is not usable as a terminal constraint in an MPC since the set is only known in the discretized space, and for discrete values of $\km$. Therefore, we propose to learn a neural network classifier that approximates the discriminating kernel in continuous space and for continuous values of $\km$. We therefore train a neural network $h_{nn}:\mathbb{R}^3 \times \mathbb{R} \rightarrow [0,1]$, that maps states and curvatures to one output. The constraint is then implemented as $h_{nn}(x,\km) \geq c$, where $c \in (0,1)$.

We use Fully Connected Neural Networks (FCNN) since they can easily handle the massive amount of data that the gridding based discriminating kernel algorithm generates. For our 13 different $\km$, we already have about 14.5 million data points, while the amount of data points grows exponentially when finer grids are used. Furthermore, FCNN can be included in an MPC formulation, since they are at least once continuous differentiable nonlinear functions if appropriate activation functions are used.

We perform a small ablation study to compare different network architectures, where we mainly compare activation functions as well as the number of neurons and layers of the network. Our study is set up as a sensitivity analysis, where we compare everything to our base case, which is a three-layer network with 16 neurons per layer, using ELU activation functions. For all our networks we use a sigmoid output layer to map the output to $[0,1]$, and use a binary cross-entropy as a training loss. The training is consistent for all the different networks, and we train the networks for 9 epochs using ADAM as an optimizer, with an initial learning rate of 0.01, which is reduced every 3 epochs by one order. Finally, we use a batch size of 1500 which allows us to train our networks within a few minutes on an i7 desktop processor. 

\begin{table}[t]
\caption{Activation Functions, 3 layers and 16 neurons, (test/val) }
\label{tab:activations}
\centering 
\begin{tabular}{@{}l c c c@{}}\toprule
activation & acc [\%]  & fn[\%] & fp[\%]\\
\midrule
\textbf{ELU} & \textbf{99.33/99.19} & \textbf{0.61/0.70} & \textbf{0.05/0.11} \\
ReLU & 98.82/99.02 & 0.84/0.88 & 0.34/0.11 \\
Softplus & 99.23/99.15 & 0.71/0.74 & 0.05/0.11 \\
Tanh & 99.30/99.22 & 0.65/0.66 & 0.04/0.11 \\
\bottomrule
\end{tabular}
\end{table}

As training dataset we use the union of the 13 computed discriminating kernels. We label points inside the discriminating kernel as 1 (safe) and points outside as 0 (unsafe). Finally, we normalize the dataset and randomly split it between a training (95\%) and a validation set (5\%). Our test set is designed to test how well the neural network generalizes to unseen $\km$. Therefore, we use two additional discriminating kernels with curvatures ${\km}_\text{, test}  = \{0.015, 0.0035\}$m$^{-1}$. The test kernels use a finer grid with twice the grid points in each direction.

To compare the results we use the following metrics: accuracy (acc), false-positive (fp), and false-negative (fn) rates. We on purpose choose the cut-off $c$ of the neural network output to result in low false-positive rates, as otherwise the states close to the top forward velocity are often deemed unsafe; specifically we chose $c = 0.25$. Note that the network that we use in the rest of the paper is always highlighted bold in the ablation study tables.

\begin{table}[t]
\caption{Number of Neurons, ELU activation and 3 layers,  (test/val)}
\label{tab:neurons}
\centering 
\begin{tabular}{@{}l c c c@{}}\toprule
\#neurons & acc [\%] & fn[\%] & fp[\%]\\
\midrule
8 & 98.96/98.88 &  0.94/0.96 & 0.09/0.15 \\
\textbf{16} & \textbf{99.33/99.19} & \textbf{0.61/0.70} & \textbf{0.05/0.11} \\
32 & 99.42/99.36 &  0.53/0.53 & 0.05/0.1 \\
128 & 99.52/99.53 &  0.42/0.39 & 0.05/0.09 \\
\bottomrule
\end{tabular}
\end{table} 
            
\begin{table}[t]
\caption{Number of Layers, ELU and 16 neurons,  (test/val)}
\label{tab:layers}
\centering 
\begin{tabular}{@{}l c c c@{}}\toprule
\#layers & acc [\%]  & fn[\%] & fp[\%]\\
\midrule
2 & 98.89/98.73  & 1.0/1.14 & 0.11/0.12 \\
\textbf{3} & \textbf{99.33/99.19} & \textbf{0.61/0.70} & \textbf{0.05/0.11} \\
6 & 99.41/99.40 & 0.51/0.52 & 0.08/0.08 \\
\bottomrule
\end{tabular}
\end{table} 

Our sensitivity analysis mainly reassures common knowledge: deeper or wider networks get better scores, see Table \ref{tab:layers} and \ref{tab:neurons}, which indirectly shows that over-fitting seems not to be an issue for this task. The only somewhat unexpected outcome is that ReLU activation functions perform worst for the given task, which can be seen in Table \ref{tab:activations}. We did not investigate this behavior further since ReLUs are not suited to be included in an MPC problem as they are not continuously differentiable. It is also interesting that in general the networks performed better on the test data, than on the validation data. This could be related to the fact that the test data only contains two unseen curvatures. However, the testset still shows that the neural networks learned to successfully interpolate in the curvature dimension. 

Our final decision to go with the selected network, was based on computational considerations, as wider and deeper networks increase the computational burden during of the MPC, and Table \ref{tab:layers} and \ref{tab:neurons} show that three layers and 16 neurons strike a good balance between accuracy and simplicity\footnote{Code for the discriminating kernel algorithm and FCNN learning available at \url{https://github.com/alexliniger/AdversarialRoadModel}}. Furthermore, the ELU activation functions had accuracy scores only rivaled by $\tanh$ activation functions, with the approximation having visually less artifacts\footnote{We tested different random seeds for the ELU network and got accuracy scores between 99.29 - 99.34\% on the testset. We report scores for one of the intermediate runs with 99.33\% accuracy.}. For a visualization of the resulting neural network based discriminating kernel and the comparison to the result of the discriminating kernel algorithm see Figure \ref{fig:NNDiscKernel}, which shows the test case with $\km = 0.0035$m$^{-1}$.

\begin{figure}[h]
\centering
\includegraphics[width = 0.375\textwidth]{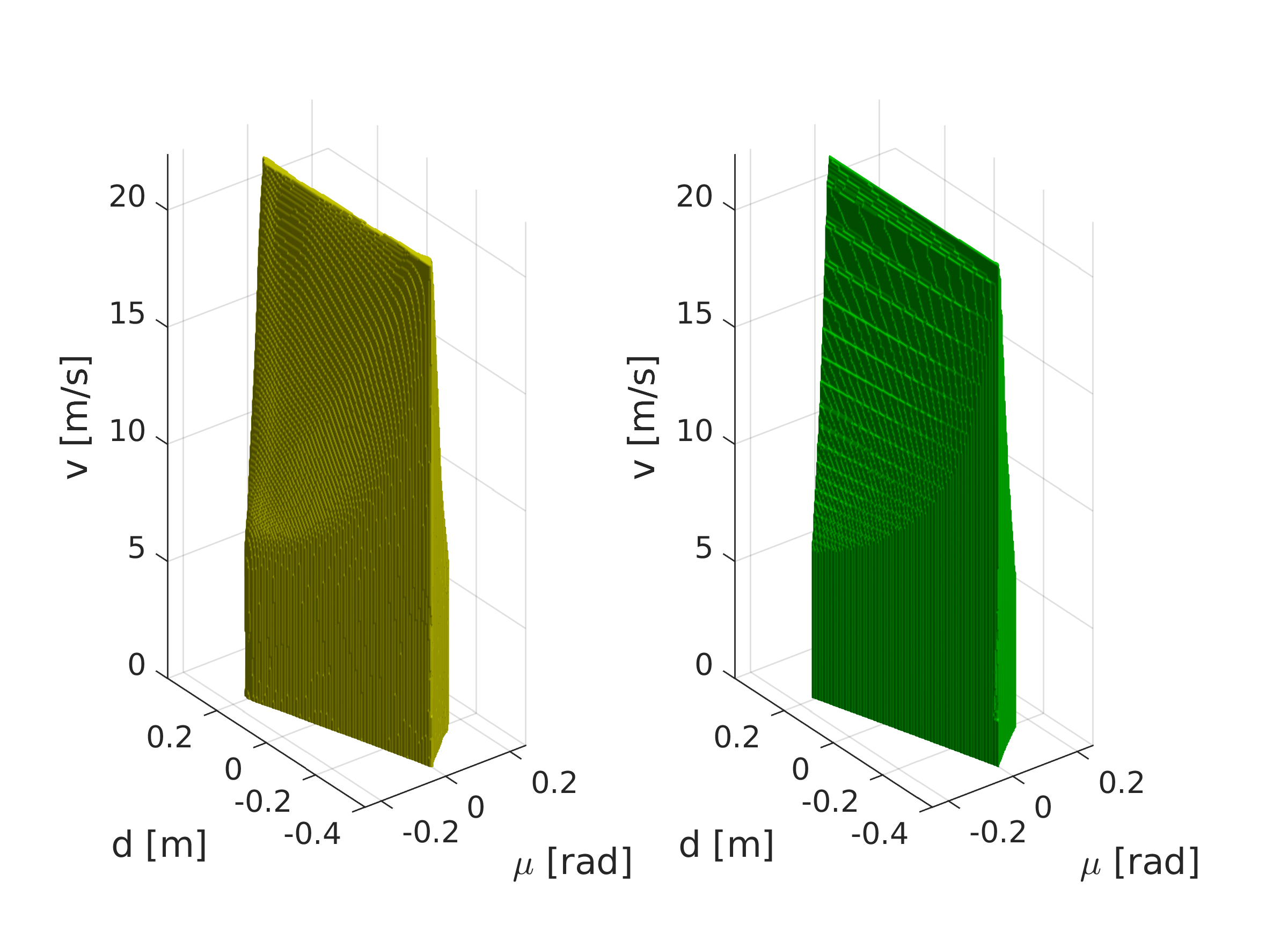}
\caption{Visualization of the discriminating kernel for $\km = 0.0035$m$^{-1}$, on the left the neural network approximation and on the right the ground truth from the discriminating kernel algorithm.} \label{fig:NNDiscKernel}
\end{figure}

\section{Simulation} \label{sec:sim}
To test the proposed terminal constraints, we performed two tests, first driving along a curvy country road, where the first part is inside a town with a 50km/h speed limit and the second part has a 80km/h speed limit, but a tight turn. The second test should show the performance in a city, and consist of two 90$^\circ$ curves, followed by a 45$^\circ$ curve. The curvature for the two test cases can be seen in Figure \ref{fig:test_curvature}.

\begin{figure}[h]
\centering
\includegraphics[width = 0.35\textwidth]{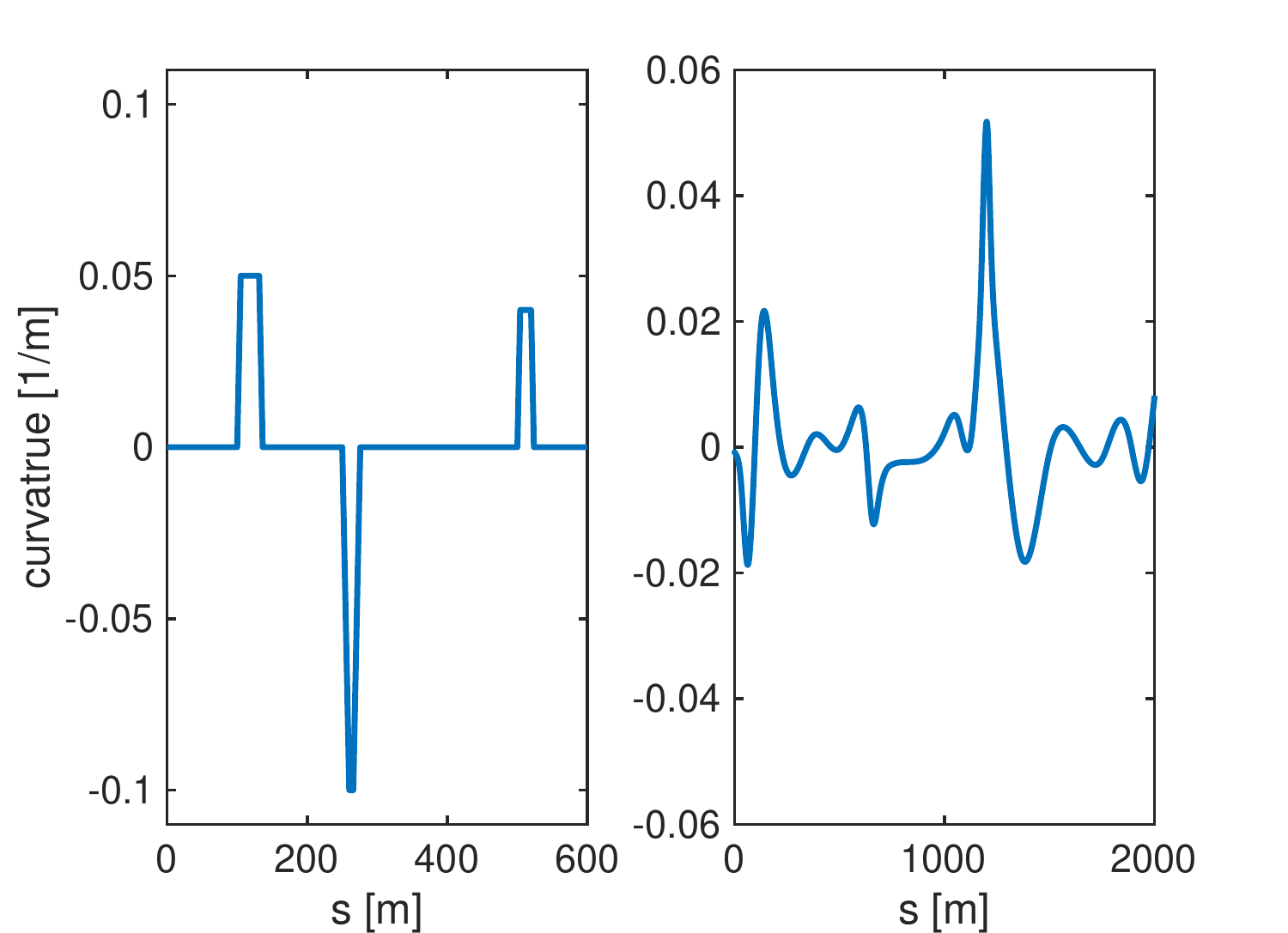}
\caption{Test road curvature, left the city road and right the country road.} \label{fig:test_curvature}
\end{figure}

\subsection{Setup}
To make our simulation study more realistic we use a bicycle model with linear tire forces for simulation \cite{rajamani2011}. To deal with low velocities where the model has a singularity, we use the method proposed in \cite{kabzan2019amz}. The parameters for the simulation model are are taken from \cite{rajamani2011} and \cite{Commonroad}.

To validate our proposed terminal constraints, based on the Discriminating Domain (DD), and the Neural Network Discriminating Kernel (NNDK), we test five different terminal constraints. Two baselines: no terminal constraint, and a zero terminal velocity constraint, as well as three based on our results: first, DD with fixed $\km$, second, DD with an adaptive $\km$ based on an algorithm introduced in Section \ref{sec:ada_km}, and finally NNDK with an adaptive $\km$. We also tested long horizons where the car can stop within the horizon (9 - 14s), and short horizons of 2s where only our proposed safe sets with the adaptive $\km$ approaches produce useful results.

Our experiments are split in two parts: first, we compare to the baselines, using long horizons, since the baselines only work with sufficiently long horizons, and in a second step we compare the DD terminal constraint with the NNDK terminal constraint using short horizons. 

\subsection{Adaptive Worst Case Curvature Computation} \label{sec:ada_km}
To implement our proposed terminal constraints, we need to know the worst curvature $\km$ of the road ahead. We propose to compute $\km$ only for a look ahead that is long enough. A long enough look ahead is one that allows us to react, to whatever the curvature in the not yet seen part of the road may be. Therefore, we first compute how long it would take to decelerate to zero from the end of the current horizon, and then compute $\km$ for 1.5 times the distance it would take to stop. Finally, we smooth $\km$ such that the controller is not disturbed, see Algorithm \ref{algo:innerApproxDisc} for the full method.

\begin{algorithm}[h!] \label{algo:innerApproxDisc}
	at time step $t$, given ${\km}_{,t-1}$\;
	get $v_T$, and $s_T$ from previous solution\;
	\Compute{}{
    $t_\text{stop} = v_T/\am$\;
	$s_\text{stop} = 0.5 \am t_\text{stop}^2 + v_T t_\text{stop}$\;
	$\km =  \max_{s \in [s_T,s_T+1.5 s_\text{stop}]} |\kappa(s)|$\;
	}
  \caption{Adaptive $\km$ computation}
\end{algorithm}
\vspace{-0.4cm}
\subsection{Implementation Details}
We implemented the MPC in Julia with the optimization framework JuMP \cite{jump} and use IPOPT to solve the resulting NLP \cite{ipopt}. Note that this implementation is not optimized for real-time use and the computation times are only reported to show relative differences. We refer to \cite{kouzoupis2018} for a discussion on real-time nonlinear MPC methods. Finally, we discretized the continuous dynamics \eqref{eq:dyn_mpc} using a trapezoidal integrator, and used a sampling time of 0.05s. 

\subsection{Results}
\subsubsection{Comparison to Baselines}
For our country road test we used a look ahead of 14s which corresponds to a horizon of 280 time steps, while for the city road test we used a look ahead of 9s which corresponds to a horizon of 180 time steps. Both are chosen such that the car can stop from the allowed top speed within the prediction horizon. We compare the long prediction horizon variants to our proposed approach that uses the DD-based terminal constraints and the adaptive $\km$ algorithm, with a look ahead of only 2s which corresponds to a horizon of 40 time steps. 

Figure \ref{fig:test_long}, shows that if a long horizon is used, only the MPC with a zero terminal velocity is significantly different, as the allowed speed limit is not reached. The MPC with the short horizon and the DD-based terminal constraint with adaptive $\km$, has again a different behavior and starts braking earlier than the other motion planners. However, the distance to the center line is very similar for all five variants, see lower part of Figure \ref{fig:test_long}.
\begin{figure}[h]
\centering
\includegraphics[width=0.725\linewidth]{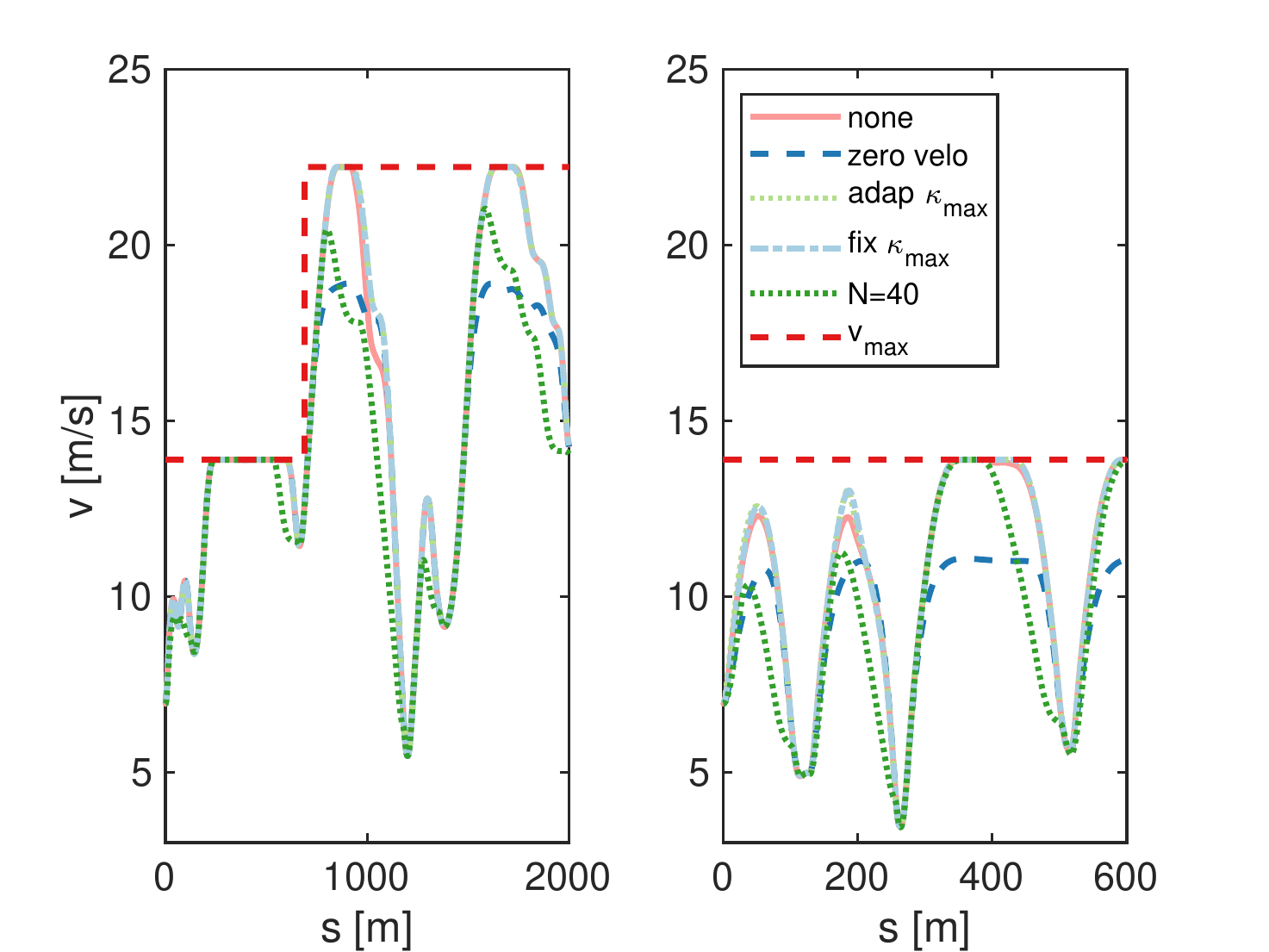}
\includegraphics[width=0.725\linewidth]{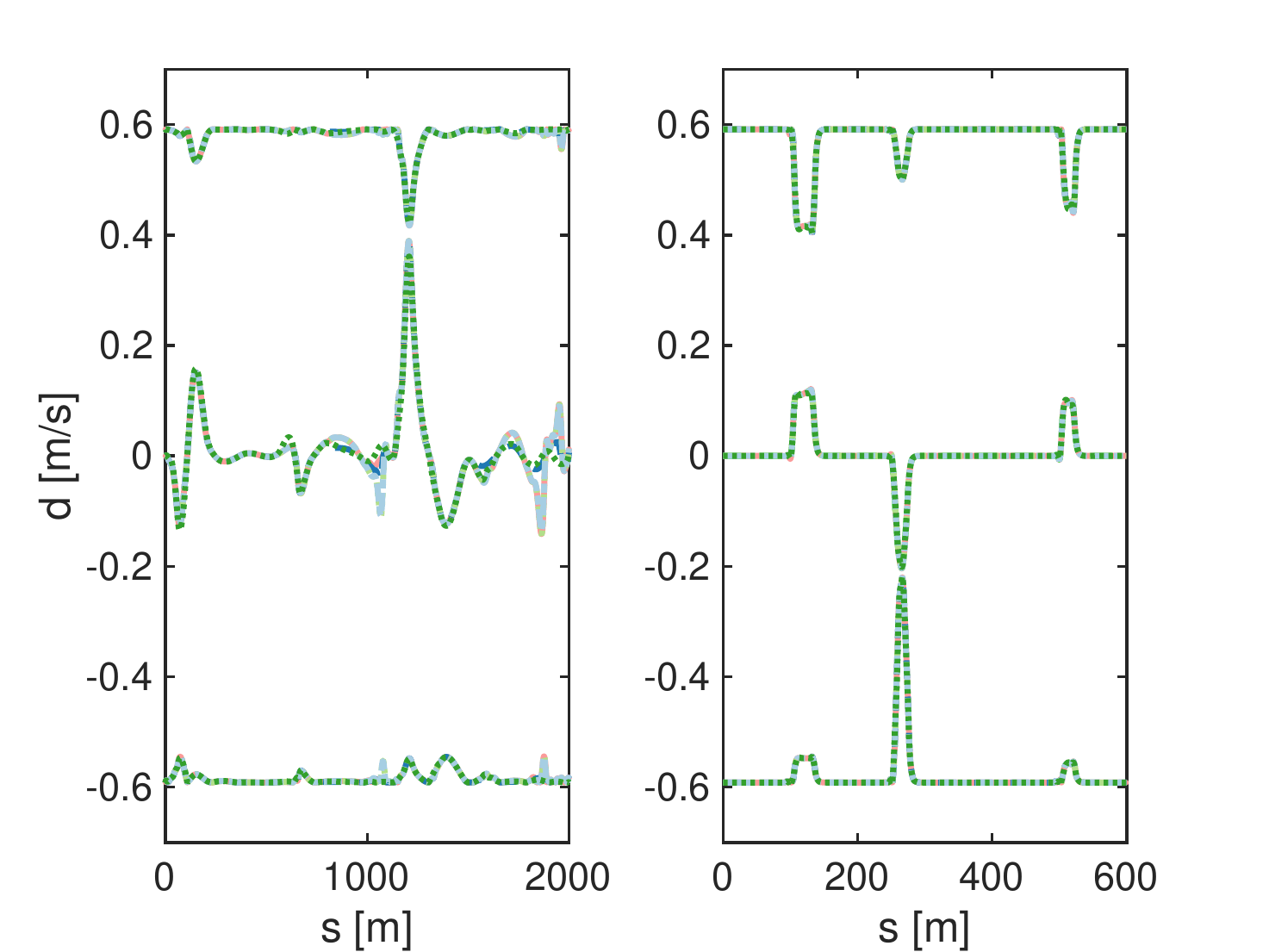}
\caption{Top: Speed profile for the comparison with the baselines. Bottom: $d$ profile for the comparison with the baselines, including a visualization of the heading dependent road constraints \eqref{eq:road_con}.} \label{fig:test_long}
\end{figure}

Similar patterns can also be seen in Table \ref{tab:test_long}, where the mean combined acceleration is the lowest for the MPC with the zero terminal velocity and the short horizon approach which is a consequence of the slower driving. For the computation time it is visible that the the horizon length has the largest impact, but for a fixed horizon length there is no clear pattern visible. The computation times are recorded on Desktop PC with an AMD 3900X processor running Ubuntu18.04.
\begin{table}[h]
\caption{Quantitative results, top country road, bottom city road}
\label{tab:test_long}
\centering 
\begin{tabular}{@{}l c c c c c@{}}\toprule
 & none & 0-velo& adap $\km$ & fix $\km$ & short\\
\midrule
mean time & 0.4273   & 0.4301  & 0.4251 & 0.4284  & 0.0428 \\
com acc& 1.1696  & 1.0640  & 1.1783 & 1.1774  & 0.9834  \\
\midrule
mean time & 0.3183   & 0.2242  & 0.3040 & 0.2659  & 0.0368 \\
comb acc& 1.0729  & 0.7850  & 1.1105 & 1.1066  & 0.8815 \\
\bottomrule
\end{tabular}
\end{table} 

\subsubsection{Discriminating Domain vs Discriminating Kernel}

Finally, we compare our path-following controller with the DD and the NNDK terminal constraint. Since we use a short horizon, we use the adaptive $\km$ approach to not be too conservative. Note that the MPC with no terminal constraint was not able to perform this task, and that imposing a zero terminal velocity or a non adaptive $\km$ results in the car driving very slowly. For example, the theoretical top speed for the zero terminal velocity approach would be 3.2m/s, and around 8m/s for the fixed $\km$ approach. As these results are of no practical interest we excluded them.
\begin{figure}[h]
\centering
\includegraphics[width=0.725\linewidth]{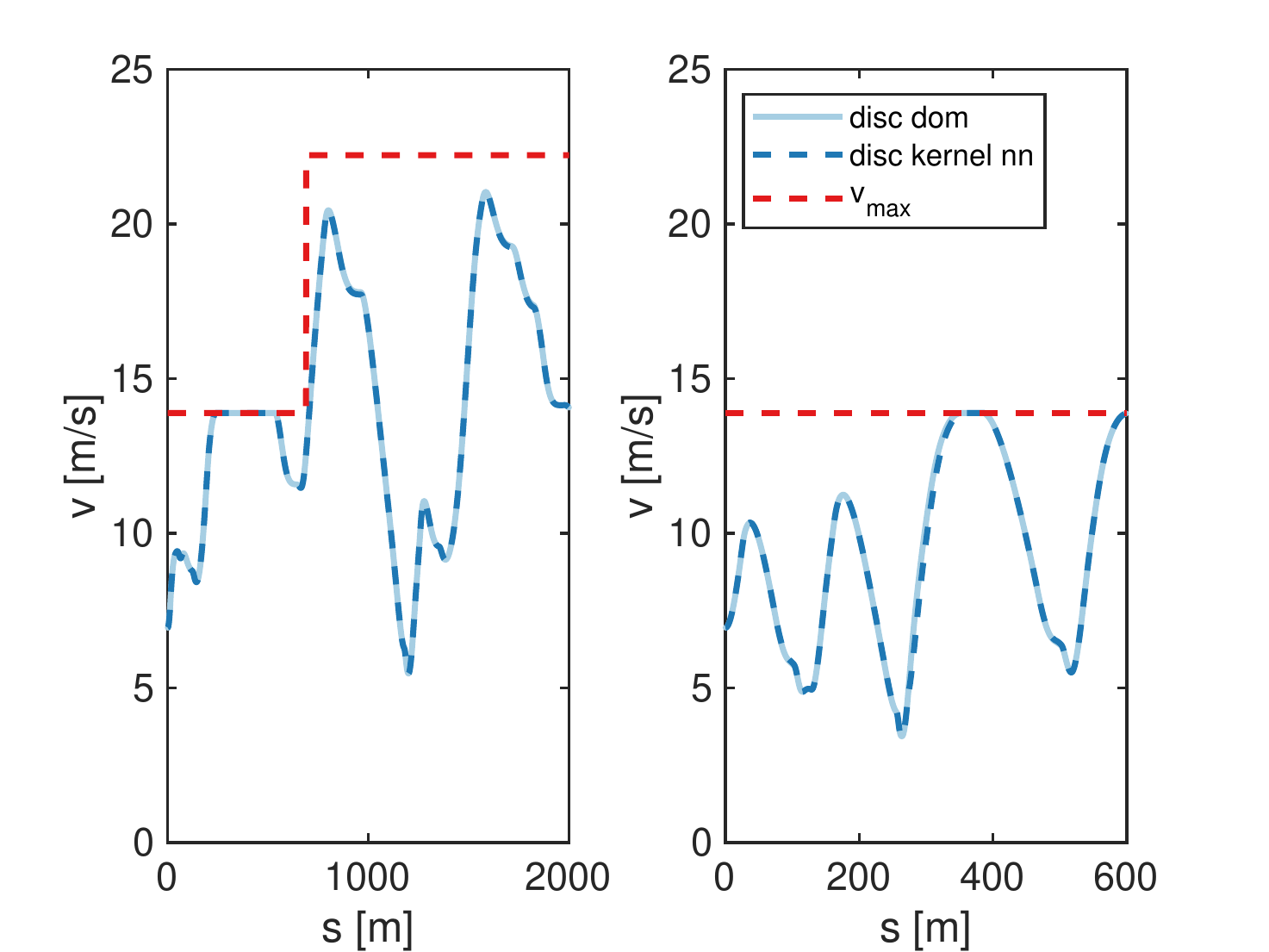}
\caption{Speed profile for the comparison between discriminating domain and discriminating kernel.} \label{fig:test_short}
\end{figure}

When comparing the velocity profiles in Figure \ref{fig:test_short} it is visible that there is basically no difference between the two approaches. The bigger difference can be seen in computation times that increase from 0.0428 to 0.0506s for the country road case and from 0.0368 to 0.0545s in the city road case.

Therefore, we conclude that for the here studied path-following MPC problem, using the DD-based terminal constraint is clearly a better solution. Although the NNDK-based terminal constraint performs the same in our simulation study, it has three main disadvantages: higher computation time, the loss of formal guarantees, and the loss in flexibility. However, if the problem would be changed, for example by using a learned car model \cite{kabzan2019learning,williams2017information}, the NNDK approach is still applicable, but the DD approach not.

\section{Conclusion and Future Work} 
\label{sec:con}

In this paper we introduce a safe optimization-based path-following controller for autonomous driving. The problem uses a kinematic model in curvilinear coordinates and has road constraints that include the orientation of the car as well as comfort constraints. Based on this path-following controller we derived a game theoretic version, where the road ahead is the adversary. Using this model we used tools from viability theory to establish safe sets for the original controller. The first is an analytical discriminating domain, that can be shown to also work if steering rate constraints are considered. The second approach computes the discriminating kernel using the gridding based discriminating kernel algorithm, to get a set that can be used in an MPC, we are using a neural network to approximate the discriminating kernel. Finally, we tested the terminal sets in simulation, and showed that if an adaptive $\km$ algorithm is used, our terminal sets allow to run short horizons, with only slightly reduced performance.

In future work, we want to include obstacle avoidance and interactions with other cars by using ideas similar to the one proposed in \cite{rss}. Additionally, it would be interesting to formulate the discriminating kernel algorithm directly using neural networks.

\bibliographystyle{plainnat}
\bibliography{references}

\end{document}